\documentclass[journal]{IEEEtran}
\IEEEoverridecommandlockouts
\usepackage{cite}
\usepackage{lipsum} 
\usepackage{amsmath,amssymb,amsfonts}
\usepackage{amsthm}
\usepackage{hyperref}
\usepackage{orcidlink}
\hypersetup{
    colorlinks = true,
    citecolor = blue,
    linkcolor = blue,
}
\usepackage{algorithm}
\usepackage{booktabs}
\usepackage{array}
\usepackage{algpseudocode}
\usepackage{comment}
\usepackage{graphicx}
\usepackage{textcomp}
\usepackage{xcolor}
\usepackage{physics}
\usepackage{multirow}
\usepackage{subcaption}
\usepackage{mathtools}

\newcommand{\orcid}[1]{\textsuperscript{\large\orcidlink{#1}}}

\newtheorem{definition}{Definition}
\newtheorem{theorem}{Theorem}
\newtheorem{proposition}{Proposition}

\newcolumntype{M}[1]{>{\centering\arraybackslash}m{#1}}

\begin{document}

\title{Direct-to-Event Spiking Neural Network Transfer}

\author{
    \IEEEauthorblockN{Nhan Trong Luu\orcid{0009-0000-6783-6257}, \IEEEmembership{Member, IEEE}, Duong Trung Luu\orcid{0009-0004-6714-2270}, Pham Ngoc Nam\orcid{0000-0001-5096-8480}, \IEEEmembership{Senior Member, IEEE} and Truong Cong Thang\orcid{0000-0002-0923-1927}, \IEEEmembership{Senior Member, IEEE}}
    \thanks{Nhan Trong Luu (luutn@ctu.edu.vn) is affiliated with College of ICT, Can Tho University, Vietnam and CECS, VinUniversity, Hanoi, Vietnam. Duong Trung Luu (luutd@ctu.edu.vn) is affiliated with Center of Digital Transformation and Communication, Can Tho University, Vietnam. Pham Ngoc Nam (nam.pn@vinuni.edu.vn) is affiliated with CECS, VinUniversity, Hanoi, Vietnam. Truong Cong Thang (thang@u-aizu.ac.jp) is affiliated with Department of Computer Science and Engineering, The University of Aizu, Japan. Corresponding to: Nhan Trong Luu.}
}

\maketitle

\begin{abstract}
Spiking Neural Networks (SNNs) have gained increasing attention due to their potential for low-power computation on neuromorphic hardware. A widely adopted training strategy for SNNs is direct coding, which enable backpropagation on neuron implementations using continuous-valued surrogate activations. However, recent studies have shown that direct-coded SNNs remain substantially less energy-efficient than their event-based counterparts, limiting their practical deployment in energy sensitive scenarios. Still, to promote the reusability of pretrained SNN database on direct code, this motivates an important yet underexplored question: \textit{How can a SNN pretrained with direct code be effectively converted into an event-based representation?} In this research, we present the first systematic investigation into this transfer problem, analyze the key challenges that arise when transitioning from direct-coded to event-based computation and propose a set of methods to enable energy-efficient transfer while preserving model performance.
\end{abstract}

\begin{IEEEkeywords}
Spiking neural network, model finetunning, image classification, direct-to-event (D2E) transfer
\end{IEEEkeywords}

\section{Introduction}

Spiking neural networks (SNNs) have emerged as a promising paradigm for energy-efficient computation, particularly in the context of neuromorphic hardware and edge deployment~\cite{luu2026survey}. By operating on discrete spike events and leveraging temporal dynamics, SNNs can exploit sparse, event-driven computation to significantly reduce energy consumption compared to conventional dense neural networks~\cite{wani2020advances, luu2026robust, goodfellow2016deep}. This advantage is especially pronounced under event-based coding schemes such as time-to-first-spike (TTFS)~\cite{auge2021survey}, where neurons fire at most once, enabling highly efficient inference.

Still, several SNN training approaches continue to rely on direct codes, in which floating-point input representations are used to obtain high-precision models on simulators \cite{zheng2021going}. However, recent studies have demonstrated that upon deployment, direct-coded SNNs are significantly less energy-efficient than their event-based counterparts \cite{kim2022rate}. This observations together make it natural to consider converting available direct-coded SNNs in prior researches into event-based forms for edge deployments and hence, motivate a simple yet crucial research question that we aim to investigate:
\begin{quote}
    \textit{How can a SNN pretrained with direct code be effectively converted into an equivalent event-based representation?}
\end{quote}
Our contributions in this work are summarized as follows:
\begin{itemize}
    \item We formulate a new optimization problem in SNN training, termed \textit{direct-to-event transfer} (D2E transfer), whose objective is to transform a high-accuracy SNN trained with direct code into an event-based implementation as effectively as possible. To the best of our knowledge, this is the first work to explicitly introduce and study this problem in the context of SNN optimization.
    \item We provide a systematic investigation, establishing theoretical basis and evaluation of several intuitive strategies designed to address the D2E transfer challenge.
\end{itemize}

\section{Definition and related works} \label{sec:problem_def}

The problem of D2E transfer can be formally defined as follow:
\begin{definition}[Direct-to-event (D2E) transfer]
    Given a stochastic optimization problem where the goal is to minimize an objective function $R(\mathbf{X})$ with respect to direct code target function $h_{\mathbf{X}}(\mathbf{X})$, using a SNN classifier $f_{\mathbf{w}}(\mathbf{X})$ with parameters $\mathbf{w}$ on direct code $\mathbf{X}$ with cross-entropy loss $\mathcal{L}_{CE}$ across timestep $T$ as:
    \[
        \min_{\mathbf{w}} R( f_{\mathbf{w}}(\mathbf{X}), h_{\mathbf{X}}(\mathbf{X})) = \mathbb{E}_{\mathbf{X}}\left[\mathcal{L}_{CE}(\mathbb{E}_T \left[f_{\mathbf{w}}(\mathbf{X})\right], h_{\mathbf{X}}(\mathbf{X})) \right],
    \]
    D2E aim to minimize a similar objective where target function shift to event-based $h_{\mathbf{X}}(\mathbf{X}) \rightarrow h_{\mathbf{S}}(\mathbf{S})$ under event-based code $\mathbf{S}$, with each of its input element $\mathbf{S}_i$ is strictly binary $\mathbf{S}_i \in \{0, 1\}$ as:
    \[
        \min_{\mathbf{w}} R(f_{\mathbf{w}}(\mathbf{S}), h_{\mathbf{S}}(\mathbf{S})) = \mathbb{E}_{\mathbf{S}} \left[ \mathcal{L}_{CE}(\mathbb{E}_T \left[f_{\mathbf{w}}(\mathbf{S})\right], h_{\mathbf{S}}(\mathbf{S})) \right].
    \]
    \label{def:d2e}
\end{definition}

Although D2E transfer bears superficial similarities to several established machine learning paradigms such as task-specific finetuning (TSF) \cite{han2024parameter}, domain adaptation \cite{ben2006analysis}, multitask learning \cite{zhang2021survey} and machine unlearning \cite{nguyen2025survey}, it is fundamentally distinct in both objective and formulation. More information can be found in Appendix \ref{apd:related_works}.

\section{Methodologies}
\subsection{Task-specific finetunning (TSF)}
As noted in Section \ref{sec:problem_def}, D2E transfer is can be considered a TSF problem, making direct finetuning the full model a straightforward approach to address the transfer problem, using the event-based objective defined in Definition \ref{def:d2e}:
\[
    \min_{\mathbf{w}}  R(f_{\mathbf{w}}(\mathbf{S}), h_{\mathbf{S}}(\mathbf{S})) 
    = \mathbb{E}_{\mathbf{S}} \left[ 
        \mathcal{L}_{CE}\left(
            \mathbb{E}_{T}\left[f_{\mathbf{w}}(\mathbf{S})\right], 
            h_{\mathbf{S}}(\mathbf{S})
        \right)
    \right].
\]
\subsection{Self-knowledge distillation (SKD)} \label{sec:proposed_method}
Although TSF is conceptually simple and straightforward to implement, there is currently no theoretical guarantee that TSF yields an optimal or even provably optimal solution to the D2E transfer problem. Based on this shortcoming, we propose a method called \emph{self-knowledge distillation} (SKD) based on the theoretical basis of domain adaptation and traditional knowledge distillation (KD) \cite{hinton2015distilling}. First, we define some terms being used in our mathematical derivation:

\begin{definition}[Soft accuracy]
    If the target function of a dataset is $h_{D}(y \mid x)$, the soft accuracy of a predictor $f$ on that dataset under input distribution $D$ is defined as:
    \[
        \mathrm{acc}_{D}(f) = \mathbb{E}_{x\sim D}\,\mathbb{E}_{y\mid x}\left[f(y\mid x)\right].
    \]
    where the classifier $f(y\mid x)$ output a conditional probability distribution over labels:
     \[
        f(y\mid x) \in [0, 1], \quad \sum_{y} f(y\mid x) = 1
    \]
\end{definition}

The motivation behind soft accuracy usage is because $argmax$ operator used in hard accuracy calculation while being widely used in practice, are difficult to perform math operations on. We then proceed to prove our theorem as follow:

\begin{theorem}[Kullback–Leibler divergence bound of cross-domain accuracy gap]
    Denote $f_{\mathbf{X}} \coloneq f_{\mathbf{w}}(\mathbf{X})$ and $f_{\mathbf{S}} \coloneq f_{\mathbf{w}}(\mathbf{S}) = f_{\mathbf{w}}(e(\mathbf{X}))$ where the event input is mappable from direct code using an encoding function as $\mathbf{S}=e(\mathbf{X})$, we would have the following bound:
    \[
        \begin{aligned}
            \left|\begin{aligned}
                & \mathrm{acc}_{\mathbf{X}}(f_{\mathbf{X}})\\
                & -\mathrm{acc}_{\mathbf{S}}(f_{\mathbf{S}}) \\
            \end{aligned}\right| \le & \sqrt{\tfrac12\,\mathbb{E}_{x\sim \mathbf{X}}\left[D_\mathrm{KL}\left(f_{\mathbf{X}}(\cdot\mid x)\,\|\,f_{\mathbf{S}}(\cdot\mid x)\right)\right]}\\
            & +2\,\mathrm{TV}(\mathbf{X}, \mathbf{S}),
        \end{aligned}
    \]
    where $D_\mathrm{KL}$ is the Kullback–Leibler divergence and $TV$ is the variational distance between 2 distribution.
    \label{theorem:main}
\end{theorem}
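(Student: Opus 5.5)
The plan is a triangle-inequality decomposition through an intermediate quantity: the accuracy of the event-based predictor $f_{\mathbf{S}}$ evaluated on the direct-code input distribution, $\mathrm{acc}_{\mathbf{X}}(f_{\mathbf{S}})$. Write
\[
\mathrm{acc}_{\mathbf{X}}(f_{\mathbf{X}})-\mathrm{acc}_{\mathbf{S}}(f_{\mathbf{S}})
=\bigl[\mathrm{acc}_{\mathbf{X}}(f_{\mathbf{X}})-\mathrm{acc}_{\mathbf{X}}(f_{\mathbf{S}})\bigr]
+\bigl[\mathrm{acc}_{\mathbf{X}}(f_{\mathbf{S}})-\mathrm{acc}_{\mathbf{S}}(f_{\mathbf{S}})\bigr],
\]
and bound the two brackets separately. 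The first bracket is a change of predictor on a fixed distribution, which yields the KL term. The second is a change of distribution for a fixed predictor, which yields the TV term. Here $f_{\mathbf{S}}(\cdot\mid x)$ is understood as $f_{\mathbf{w}}(\cdot\mid e(x))$, so it is a well-defined conditional distribution for every $x$.

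For the first bracket, fix $x$. Since $f(y\mid x)\in[0,1]$, the inner difference $\mathbb{E}_{y\mid x}[f_{\mathbf{X}}(y\mid x)-f_{\mathbf{S}}(y\mid x)]$ is an expectation of a function of $y$ taking values in $[-1,1]$. It is therefore at most $\sum_y h(y\mid x)\,|f_{\mathbf{X}}(y\mid x)-f_{\mathbf{S}}(y\mid x)|\le \max_y|f_{\mathbf{X}}-f_{\mathbf{S}}|\le \mathrm{TV}(f_{\mathbf{X}}(\cdot\mid x),f_{\mathbf{S}}(\cdot\mid x))$. The maximal single-coordinate difference between two distributions is bounded by their total variation, because the positive and negative parts of the difference each sum to the TV. Pinsker's inequality then bounds this pointwise by $\sqrt{\tfrac12 D_\mathrm{KL}(f_{\mathbf{X}}\|f_{\mathbf{S}})}$. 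Taking $\mathbb{E}_{x\sim\mathbf{X}}$ and applying Jensen's inequality to the concave square root moves the expectation inside, giving exactly the first term of the stated bound.

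For the second bracket, set $g(x)=\mathbb{E}_{y\mid x}[f_{\mathbf{S}}(y\mid x)]\in[0,1]$. The bracket is then $\mathbb{E}_{\mathbf{X}}[g]-\mathbb{E}_{\mathbf{S}}[g]$, with the difference of label-conditionals between domains absorbed into the same treatment. For any function bounded in $[0,1]$, the variational characterization gives $|\mathbb{E}_P g-\mathbb{E}_Q g|\le \mathrm{TV}(P,Q)$. A factor of $2$ appears under the $\sup_A|P(A)-Q(A)|$ convention for functions in $[-1,1]$, or when the target conditionals $h_{\mathbf{X}}$ and $h_{\mathbf{S}}$ also differ. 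In that case a second TV-type contribution arises from switching $\mathbb{E}_{y\mid x}$ between the two targets. So I would bound the second bracket by $2\,\mathrm{TV}(\mathbf{X},\mathbf{S})$, which is loose but sufficient.

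The main obstacle is making this second step rigorous. $\mathbf{X}$ and $\mathbf{S}$ live on different spaces: $\mathbf{S}$ is binary while $\mathbf{X}$ is real-valued. I would interpret $\mathrm{TV}(\mathbf{X},\mathbf{S})$ as the total variation between the joint input–label distributions of the two domains, pulled back to a common space through the encoder $e$ and evaluated on the common composite predictor $f_{\mathbf{w}}\circ e$ versus $f_{\mathbf{w}}$. Under this interpretation, both the marginal shift and the label-conditional shift are each controlled by that TV, which accounts for the constant $2$. Finally, combining the two brackets with the triangle inequality gives the absolute-value bound as stated.
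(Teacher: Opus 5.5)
Your skeleton is the same as the paper's proof: insert $\mathrm{acc}_{\mathbf X}(f_{\mathbf S})$, control the change of predictor by Pinsker plus Jensen, and control the change of distribution by the dual form of TV. The paper's displayed triangle inequality repeats the left-hand side as its second term, but your version is clearly the intended one.

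The genuine difference is your lemma in the first bracket. The paper bounds $\mathbb{E}_{y\mid x}|f_{\mathbf X}-f_{\mathbf S}|$ by $\sum_y|f_{\mathbf X}-f_{\mathbf S}|=2\,\mathrm{TV}$. After Pinsker it then writes $\sqrt{\tfrac12 D_{\mathrm{KL}}}$, which silently drops a factor of $2$; followed honestly, its chain only yields $\sqrt{2\,\mathbb{E}_x D_{\mathrm{KL}}}$. Your observation is that an $h(\cdot\mid x)$-weighted average of $|f_{\mathbf X}-f_{\mathbf S}|$ is at most $\max_y|f_{\mathbf X}-f_{\mathbf S}|\le\mathrm{TV}$. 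That is exactly what delivers the stated constant, so on this step your route is strictly better and the paper's buys nothing extra.

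In the second bracket, the paper simply applies duality with $\|g\|_\infty\le 1$, which is where its $2$ comes from. It ignores that $\mathbf X$ and $\mathbf S$ live on different spaces and carry different targets. You are right to flag this, but you can dispose of it more simply than your joint-TV accounting does.

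Use your own convention $f_{\mathbf S}(\cdot\mid x)=f_{\mathbf w}(\cdot\mid e(x))$ together with the theorem's hypothesis that the event data are $(e(\mathbf X),Y)$. Then $h_{\mathbf S}(\cdot\mid s)$ is the law of $Y$ given $e(\mathbf X)=s$, which is also how every experiment builds its event data. The tower property gives $\mathrm{acc}_{\mathbf S}(f_{\mathbf S})=\mathbb{E}\bigl[f_{\mathbf w}(Y\mid e(\mathbf X))\bigr]=\mathrm{acc}_{\mathbf X}(f_{\mathbf S})$. So the second bracket is exactly zero, and the TV term is pure slack.

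If you want to allow an event-domain law $P_{\mathbf S Y}$ other than the pushforward $(e\times\mathrm{id})_{\#}P_{\mathbf X Y}$, one duality step suffices. Apply it to $\phi(s,y)=f_{\mathbf w}(y\mid s)\in[0,1]$ on the joint space. This bounds the bracket by the TV between those two joint laws, with constant $1$. There is no need to split into a marginal shift and a label-conditional shift that ``each'' contribute one TV in order to explain the $2$.
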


Proof for Theorem \ref{theorem:main} can be found in Appendix \ref{proof:theorem:main}. As seen from the theorem, if direct-code pretrained SNN $f_{\mathbf{X}}$ serve as a reasonable approximation of the direct domain target $h_{X}$, we can explicitly attempt to minimize the KL divergence between direct and event implementation to speed up convergence during TSF using a KD-like \cite{hinton2015distilling} training regime as:
\[
    \min_{\mathbf{w}}  R(f_{\mathbf{S}}, h_{\mathbf{X}}) 
    = \mathbb{E}_{x \sim \mathbf{X}} \left[
    \begin{aligned}
        & \alpha\cdot \mathcal{L}_{CE}\left(
        \begin{aligned}
            &\mathbb{E}_{T}\left[f_{\mathbf{S}}(e(x))\right], \\
            & h_{\mathbf{X}}(x)
        \end{aligned}
        \right)\\
        & + (1-\alpha)\cdot D_\mathrm{KL}\left( \begin{aligned}
             & f_{\mathbf{X}}(y \mid x)\\ 
             & \|\,f_{\mathbf{S}}(y \mid x)
        \end{aligned}\right)\\
    \end{aligned} \right].
\]
where $\alpha$ is the loss balancing term. In essence, we are performing distillation to the model using its pretrained self on different input distribution.

\section{Experimental settings}
To refine the SNN components, we implemented leaky integrate-and-fire (LIF) neurons featuring a hard-reset mechanism. Gradient descent was enabled via Backpropagation Through Time (BPTT) using an arctangent surrogate function \cite{fang2021deep}. We evaluated our SNN variants over 8 time steps ($T=8$) utilizing both direct \cite{kim2022rate} and time-to-first-spike (TTFS) coding schemes. All models were developed in PyTorch \cite{paszke2019pytorch} using the SpikingJelly library \cite{doi:10.1126/sciadv.adi1480}\footnote{Source code for our experiments is available on Zenodo at DOI \href{https://www.doi.org/10.5281/zenodo.19925749}{10.5281/zenodo.19925749}.}.

Following established optimization strategies for surrogate SNNs \cite{fang2021deep, 11261679, luu2026parameter, luu2025hybrid}, we trained the networks for 200 epochs using SGD with Nesterov momentum ($0.9$). The learning rate was scaled as $lr = 0.1 \times (\text{batch size}/256)$, incorporating a warmup phase starting from $lr/10$ followed by a Cosine Annealing schedule \cite{loshchilov2016sgdr}. We maintained a constant SKD loss weight of $\alpha=0.4$ and used fixed random seeds to ensure the reproducibility of all experimental results.

\section{Results}

\subsection{Benchmarking on various SNN architecture}

\begin{table*}[t!]
    \centering
    \caption{Average validation accuracy of various SNN model implementation on CIFAR-10 and CIFAR-100 dataset using our proposed method over single-shot finetuning. Best performances on each dataset are denoted in bold and second best are underlined. Performance gain when compare with its equivalent baseline is denoted with green on right side of results.}
    \resizebox{\linewidth}{!}{
    \begin{tabular}{l*{12}{c}} 
    \toprule
    \multirow{4}{*}{\shortstack{Model variant}} & \multicolumn{6}{c}{CIFAR10} & \multicolumn{6}{c}{CIFAR100}\\
     \cmidrule{2-13}
     & \multicolumn{2}{c}{Baseline} & \multicolumn{2}{c}{TSF} & \multicolumn{2}{c}{\bf SKD (Ours)} & \multicolumn{2}{c}{Baseline} & \multicolumn{2}{c}{TSF} & \multicolumn{2}{c}{\bf SKD (Ours)} \\
     \cmidrule{2-13}
     & Direct & TTFS & Direct & TTFS & Direct & TTFS & Direct & TTFS & Direct & TTFS & Direct & TTFS\\
    \midrule
    Spiking-RN18 & 79.64 & 23.57 & 19.61 &  \underline{61.91} \textcolor{green}{(+38.3)} & 19.32 & \textbf{63.27} \textcolor{green}{(+39.7)} & 51.04 & 4.39 & 8.08 & \underline{30.10} \textcolor{green}{(+22.0)} & 8.22 & \textbf{30.74} \textcolor{green}{(+22.7)} \\
    Spiking-VGG11 & 82.28 & 21.75 & 33.71 & \underline{67.67} \textcolor{green}{(+45.9)} & 34.43 & \textbf{68.71} \textcolor{green}{(+47.0)} & 54.99 & 6.45 & 9.84 & \underline{34.83} \textcolor{green}{(+28.4)} & 10.28 & \textbf{35.97} \textcolor{green}{(+29.5)}\\
    Spiking-VGG13 & 85.33 & 26.33 & 55.90 & \textbf{70.57} \textcolor{green}{(+44.2)} & 51.05 & \underline{70.41} \textcolor{green}{(+44.1)} & 58.79 & 10.43 & 14.24 & \underline{38.50} \textcolor{green}{(+28.1)} & 12.82 & \textbf{39.75} \textcolor{green}{(+29.3)}\\
    Spiking-VGG16 & 82.49 & 12.55 & 27.59 & \underline{61.54} \textcolor{green}{(+49.0)} & 27.05 & \textbf{63.35} \textcolor{green}{(+51.0)} & 56.47 & 9.45 & 13.40 & \underline{34.07} \textcolor{green}{(+24.6)} & 14.12 & \textbf{34.55} \textcolor{green}{(+25.1)}\\
    Spiking-WRN16 & 88.20 & 19.26 & 62.91 & \textbf{67.94} \textcolor{green}{(+48.7)} & 52.61 & \underline{65.99} \textcolor{green}{(+46.7)}& 65.38 & 5.15 & 33.02 & \textbf{40.38} \textcolor{green}{(+35.2)}& 34.19 & \textbf{40.38} \textcolor{green}{(+35.2)}\\
    Spiking-WRN20 & 90.11 & 28.69 & 50.66 & \underline{73.10} \textcolor{green}{(+44.4)} & 55.65 & \textbf{74.18} \textcolor{green}{(+45.5)} & 62.52 & 6.60 & 17.23 & \underline{43.07} \textcolor{green}{(+36.5)} & 16.45 & \textbf{43.83} \textcolor{green}{(+37.2)}\\
    SEW-RN18 & 82.42 & 26.37 & 32.35 & \underline{65.20} \textcolor{green}{(+38.8)} & 28.24 & \textbf{66.10} \textcolor{green}{(+39.7)} & 54.11 & 4.96 & 10.71 & \underline{34.52} \textcolor{green}{(+26.6)} & 9.16 & \textbf{35.39} \textcolor{green}{(+30.4)}\\
    SEW-RN34 & 82.93 & 29.93 & 37.82 & \underline{65.04} \textcolor{green}{(+35.1)}  & 35.04 & \textbf{67.16} \textcolor{green}{(+37.2)} & 54.16 & 4.43 & 12.72 & \underline{31.35} \textcolor{green}{(+26.9)} & 11.62 & \textbf{32.70} \textcolor{green}{(+28.3)}\\
    SEW-RN50 & 82.83 & 23.33 & 26.81 & \underline{64.08} \textcolor{green}{(+40.8)} & 25.16 & \textbf{65.03} \textcolor{green}{(+41.7)}& 55.19 & 5.11 & 10.20 & \underline{32.22} \textcolor{green}{(+27.1)} & 10.20 & \textbf{32.41} \textcolor{green}{(+27.3)}\\
    \bottomrule
    \end{tabular}}
    \label{tab:perf_compare}
    \vspace{-0.7\intextsep}
\end{table*}

Table \ref{tab:perf_compare} reports the average validation accuracy of various SNN model variants on the CIFAR-10 and CIFAR-100 datasets \cite{alex2009learning}. We compared naive TSF with our proposed SKD method under single-shot finetuning of batch size 256 using a single NVIDIA RTX 3090 24GB, repeated across multiple spiking architectures including: Spiking-ResNet \cite{hu2021spiking} (Spiking-RN), Spiking-VGG \cite{sengupta2019going}, Spiking-WideResNet (Spiking-WRN) \cite{zagoruyko2016wide} and SEW-ResNet (SEW-RN) \cite{fang2021deep}. 

As seen from Table~\ref{tab:perf_compare}, the results demonstrate that SKD consistently improves classification performance across all tested architectures on event-based finetuning. On CIFAR-10, SKD achieving superior validation accuracy on 7 out of 9 tested variant with substantial accuracy gains ranging from as low as around 25\% to 51\% (compare with around 30\%-49\% on TSF), while also exhibit the largest improvements observed for deeper architectures such as Spiking-VGG16 (51\%) and Spiking-VGG11 (47\%). Similarly, on the more challenging CIFAR-100 dataset, SKD yields even more consistent improvements with 8/9 cases outperforming TSF, with accuracy gains as high as 37.2\% (validation accuracy of 43.83\% with Spiking-WRN20) when compare with 36.5\% (validation accuracy of 43.7\% with Spiking-WRN20) over the pretrained baseline.

\subsection{Benchmarking on simulated DVS sensors}

\begin{table*}[t!]
    \centering
    \caption{Average validation accuracy of various SNN model implementation with simulated DVS sensors using our proposed method over single-shot finetuning. Best performances on each dataset are denoted in bold and second best are underlined. Performance gain when compare with its equivalent baseline is denoted with green on right side of results.}
    \resizebox{\linewidth}{!}{
    \begin{tabular}{l*{12}{c}} 
    \toprule
    \multirow{4}{*}{\shortstack{Model variant}} & \multicolumn{6}{c}{CIFAR10} & \multicolumn{6}{c}{CIFAR100}\\
     \cmidrule{2-13}
     & \multicolumn{2}{c}{Baseline} & \multicolumn{2}{c}{TSF} & \multicolumn{2}{c}{\bf SKD (Ours)} & \multicolumn{2}{c}{Baseline} & \multicolumn{2}{c}{TSF} & \multicolumn{2}{c}{\bf SKD (Ours)} \\
     \cmidrule{2-13}
     & Direct & DVS & Direct & DVS & Direct & DVS & Direct & DVS & Direct & DVS & Direct & DVS\\
    \midrule
    Spiking-RN18 & 79.64 & 12.25 & 22.15 & \underline{61.43} \textcolor{green}{(+49.8)} & 18.61 & \textbf{62.47} \textcolor{green}{(+50.2)} & 51.04 & 2.36 & 8.49 & \underline{29.09} \textcolor{green}{(+26.7)} & 8.33 & \textbf{29.64} \textcolor{green}{(+27.3)} \\
    Spiking-VGG11 & 82.28 & 13.38 & 24.08 & \underline{63.39} \textcolor{green}{(+50.0)} & 20.68 & \textbf{64.87} \textcolor{green}{(+51.5)} & 54.99 & 2.41 & 9.85 & \underline{31.28} \textcolor{green}{(+28.9)} & 8.03 & \textbf{33.86} \textcolor{green}{(+31.5)}\\
    SEW-RN18 & 82.42 & 13.26 & 24.54 & \underline{64.38} \textcolor{green}{(+51.1)} & 22.67 & \textbf{65.04} \textcolor{green}{(+51.9)} & 54.11 & 2.64 & 10.99 & \underline{33.24} \textcolor{green}{(+29.6)} & 9.03 & \textbf{34.89} \textcolor{green}{(+32.3)}\\
    \bottomrule
    \end{tabular}}
    \label{tab:dvs_compare}
    \vspace{-0.7\intextsep}
\end{table*}

\begin{figure}[t!]
    \centering
    \includegraphics[width=\linewidth]{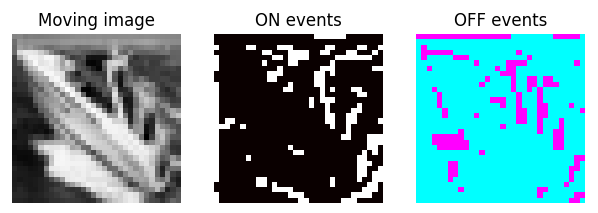}
    \caption{Visualization of simulated DVS sensor on CIFAR-10 based on technical details described in prior work \cite{li2017cifar10}.}
    \label{fig:dvs}
    \vspace{-0.7\intextsep}
\end{figure}

To demonstrate that our method generalizes effectively to other event-based sensors such as Dynamic Vision Sensors (DVS), we reimplemented a DVS-style data encoding pipeline following the procedure used to construct the CIFAR-10-DVS dataset \cite{li2017cifar10}. This approach is necessary because to the best of our knowledge, no existing dataset simultaneously provides paired direct-frame data and corresponding realistic neuromorphic representations. Visualization of sensors output on simulated device can be found in Figure \ref{fig:dvs}.

As seen from Table \ref{tab:dvs_compare}, our proposed method consistently achieves the best performance across all datasets and architectures. On CIFAR10, SKD achieves a peak accuracy of $65.04\%$ for SEW-RN18, representing a $51.9\%$ gain over the baseline. The complexity of the CIFAR100 dataset further highlights the robustness of SKD, where Spiking-VGG11 architecture reaches $33.86\%$ accuracy, outperforming the TSF accuracy of $31.28\%$ and providing a $31.5\%$ improvement over the base model.

\subsection{Benchmarking on large-scale dataset}

\begin{table}[t!]
    \centering
    \caption{Average validation accuracy of various SNN model implementation on ImageNet using our proposed method over single-shot finetuning. Best performances on each dataset are denoted in bold and second best are underlined. Performance gain when compare with its equivalent baseline is denoted with green on right side of results.}
    \resizebox{\linewidth}{!}{
    \begin{tabular}{l*{6}{c}} 
    \toprule
    \multirow{3}{*}{\shortstack{Model variant}} & \multicolumn{2}{c}{Baseline} & \multicolumn{2}{c}{TSF} & \multicolumn{2}{c}{\bf SKD (Ours)} \\
     \cmidrule{2-7}
     & Direct & TTFS & Direct & TTFS & Direct & TTFS\\
    \midrule
    Spiking-VGG11 & 58.11 & 5.42 & 11.25 & \underline{29.16} \textcolor{green}{(+23.7)} & 10.03 & \textbf{31.36} \textcolor{green}{(+25.9)}\\
    Spiking-WRN16 & 62.65 & 6.21 & 10.69 & \underline{30.28} \textcolor{green}{(+24.1)} & 9.77 & \textbf{35.04} \textcolor{green}{(+28.8)}\\
    SEW-RN18 & 59.81 & 5.15 & 14.03 & \underline{27.67} \textcolor{green}{(+22.5)} & 11.05 & \textbf{29.99} \textcolor{green}{(+24.7)}\\
    \bottomrule
    \end{tabular}}
    \label{tab:imn_compare}
    \vspace{-0.7\intextsep}
\end{table}

To validate our hypothesis on large scale dataset, we benchmarked our proposed method against TSF on ImageNet \cite{deng2009imagenet} using a NVIDIA A100 40GB for 100 epochs. As denoted in Table \ref{tab:imn_compare}, Spiking-WRN16 architecture achieved the highest overall event-based accuracy of $35.04\%$ in terms of peak accuracy while also provided the highest relative improvement with a $28.8\%$ gain compared to the baseline. Furthermore, in all cases, SKD outperformed TSF in all 3 out of 3 validation accuracy comparision, confirming that distillation provides a more robust optimization path than standard fine-tuning for neuromorphic tasks.

\subsection{Ablation studies}
\label{sec:ablation}

\begin{table}[!t]
\centering
\caption{Validation accuracy (\%) of SKD on Spiking-VGG11 across loss-balance
coefficients $\alpha$.
Best results are in bold and second best underlined.}
\label{tab:alpha-ablation}
\resizebox{\width}{!}{\begin{tabular}{lcccccc}
\toprule
\multirow{3}{*}{Dataset} & \multicolumn{6}{c}{Choices of $\alpha$}\\
\cmidrule{2-7}
 & 0.0 & 0.2 & 0.4 & 0.6 & 0.8 & 1.0 \\
\midrule
CIFAR-10  & 67.95 & \textbf{68.85} & \underline{68.71} & 68.32 & 67.96 & 67.67 \\
CIFAR-100 & 35.62 & \textbf{36.18} & \underline{35.97} & 35.55 & 35.18 & 34.83 \\
\bottomrule
\end{tabular}}
\vspace{-0.7\intextsep}
\end{table}

Finally, to assess how the trade-off between hard-label supervision and self-distillation affects D2E transfer, we sweep the loss-balancing coefficient
$\alpha \in \{0.0, 0.2, 0.4, 0.6, 0.8, 1.0\}$ on Spiking-VGG11, holding all other
hyperparameters fixed. Note that $\alpha = 1.0$ recovers TSF exactly (no
distillation signal), while the row entry at $\alpha=0.4$ matches the configuration used in Table~\ref{tab:perf_compare} and $\alpha = 0.0$ corresponds to pure self-distillation without ground-truth supervision.

The accuracy curve exhibits a mild inverted-U shape on both datasets, peaking
near $\alpha \approx 0.2$-$0.4$ and degrading monotonically as $\alpha$ moves
toward either extreme. SKD outperforms TSF across the entire interval
$\alpha \in [0.0, 0.8]$, indicating that the gain is robust to the precise
choice of $\alpha$ rather than dependent on careful tuning. At $\alpha=0.0$,
performance falls below the peak by $\sim$0.9~pp on CIFAR-10 and
$\sim$0.6~pp on CIFAR-100, confirming that the cross-entropy term remains a
useful anchor: pure self-distillation lacks a corrective signal to ground truth
and is slightly more vulnerable to teacher imperfections. The optimal $\alpha$ shifts marginally toward smaller values on CIFAR-100 than on CIFAR-10. This is consistent with the standard observation in knowledge distillation~\cite{hinton2015distilling} that richer label spaces make the teacher's soft outputs more informative per sample, so a heavier distillation weight ($1-\alpha$) better exploits inter-class similarity structure.

\section{Energy efficiency}
\label{sec:energy_estimation}

In this section, we analytically estimate inference energy by following the standard methodology in SNN literature, which counts synaptic operations (SOPs) and weights them by per-operation energy costs (e.g., multiply-accumulate (MAC) versus accumulate-only (AC) operations), adopting commonly used energy values for 45nm CMOS with 32-bit floating point arithmetic~\cite{horowitz20141}. For a convolutional layer $l$ with $C_{\text{in}}$ input channels, $C_{\text{out}}$ output channels, kernel size $K \times K$, and output spatial resolution $H_l \times W_l$, the number of SOPs $M_l$ is:
\[
    M_l = C_{\text{in}} \, K^2 \, H_l \, W_l \, C_{\text{out}}.
\]

Let $r_{\text{in}}$ denote the input spike rate and $r_l$ the intermediate spike rate at layer $l$. The total inference energy of direct coding $E_{\text{direct}}$ and TTFS $E_{\text{TTFS}}$ is:
\[
    \begin{aligned}
        E_{\text{direct}} &= E_{\text{MAC}} \, M_1 
    + T \, E_{\text{AC}} \sum_{l=2}^{L} r_l^{\text{dir}} \, M_l, \\
        E_{\text{TTFS}} &= T \, r_{\text{in}} \, E_{\text{AC}} \, M_1 
    + T \, E_{\text{AC}} \sum_{l=2}^{L} r_l^{\text{evt}} \, M_l.
    \end{aligned}
\]
Two structural properties simplify the first-layer contribution: (i) direct coding evaluates the first layer only once since the input is static across time, and (ii) under TTFS coding, each input neuron fires at most once, giving $r_{\text{in}} = 1/T$ and thus $T r_{\text{in}} = 1$. Therefore, the first layer reduces to exactly one AC pass in the event-based case.

\begin{table}[t]
\centering
    \caption{Comparison of SOPs, average per LIF layer firing rate $r$ and energy consumption in milijoules (mJ) between direct and TTFS coding of Spiking-VGG11 and Spiking-RN18 over a CIFAR-10 image batch inference, with differences denoted in green.}
    \resizebox{\width}{!}{
    \begin{tabular}{lllll}
    \toprule
     Model variant & \shortstack{Firing\\rate $r$} & \shortstack{SOPs (M)} & \shortstack{Direct\\energy} & \shortstack{TTFS energy}\\
    \midrule
    Spiking-VGG11 & 0.10 & 152.8 & 52.16 & 28.24 \textcolor{green}{(-45.9\%)} \\
    Spiking-RN18 & 0.18 & 555.4 & 185.8 & 102.5 \textcolor{green}{(-44.8\%)} \\
    \bottomrule
    \end{tabular}}
    \label{tab:energy}
    \vspace{-0.7\intextsep}
\end{table}

Table~\ref{tab:energy} shows that total energy closely follows the number of synaptic operations SOPs, with Spiking-RN18 incurring $\sim 3.6\times$ more SOPs than Spiking-VGG11 (555.4M versus 152.8M) and correspondingly higher absolute energy consumption\footnote{SOPs of the first convolution layer is the same for both model, contributing $M_1 = 1.77$M.}. Despite the large difference in model size and baseline firing rates, both architectures achieve a nearly identical energy reduction of $\approx 45\%$ ($\approx 23.9$ mJ for Spiking-VGG11 versus $\approx 83.3$ mJ for Spiking-RN18) when switching from direct to TTFS coding. Spiking-VGG11 operates at a lower average firing rate ($r=0.10$), while Spiking-RN18 exhibits denser activity ($r=0.18$).

\section{Limitations}

\begin{table}[t]
    \centering
    \caption{Comparison of theoretical computational cost between TSF and SKD, assuming that forward pass cost $F$ FLOPs and backward cost $\sim 2F$ using BPTT. N/A entries are where computation cost is not applicable. Under the assumption that $F_{\text{dir}} \approx F_{\text{evt}}$ over full precision training, SKD increases training cost by $\sim 33\%$ w.r.t. TSF.}
    \resizebox{\width}{!}{
    \begin{tabular}{lll}
    \toprule
    Method & TSF & \bf SKD (Ours) \\
    \midrule
    Event forward (event, $T$ steps) & $F_{\text{evt}}$ & $F_{\text{evt}}$ \\
    Event backward (BPTT) & $2F_{\text{evt}}$ & $2F_{\text{evt}}$ \\
    Direct forward & N/A & $F_{\text{dir}}$ \\
    Direct backward & N/A & N/A \\
    \midrule
    Total cost & $3F_{\text{evt}}$ & $3F_{\text{evt}} + F_{\text{dir}}$ \\
    \bottomrule
    \end{tabular}}
    \label{tab:cost}
    \vspace{-0.7\intextsep}
\end{table}

\begin{table}[t]
    \centering
    \caption{Cost comparison per forward-backward step (full precision training) between TSF and SKD on Spiking-VGG11 over CIFAR10 dataset. Differences are denoted in red.}
    \resizebox{\width}{!}{
    \begin{tabular}{lll}
    \toprule
    Cost metrics & TSF & \bf SKD (Ours) \\
    \midrule
    FLOPs / step & $9.39 \times 10^{11}$ & $1.25 \times 10^{12}$ \textcolor{red}{($+33.3\%$)} \\
    Energy / step & $4.32$ J & $5.76$ J \textcolor{red}{($+1.44$ J)} \\
    \bottomrule
    \end{tabular}}
    \label{tab:cost_practical}
    \vspace{-0.7\intextsep}
\end{table}

Despite its strong empirical performance, the proposed SKD framework has several limitations:
\begin{itemize}
    \item Similar to conventional KD, SKD introduces additional computational overhead during finetuning compared to TSF (rough theoretical estimation can be found in Table~\ref{tab:cost} and practical evaluation on Spiking-VGG11 over CIFAR10 can be found in Table~\ref{tab:cost_practical}), which may limit its use in resource-constrained settings.
    \item SKD depends on the quality of the pretrained direct-coded model. When the teacher model performs poorly or the accuracy gap between direct-coded and event-based representations is small, the distillation signal weakens, reducing the benefit of SKD over TSF.
\end{itemize}

Overall, as a distillation-based method, SKD inherits the general limitations of KD, motivating further investigation into more efficient and robust variants.

\section{Conclusion}

In conclusion, this work introduces and addresses the novel challenge of D2E transfer, offering a principled pathway to bridge the gap between high-precision SNN simulations and energy-efficient neuromorphic deployments. Our empirical results show that conventional TSF are insufficient to preserve performance when transferring models across heterogeneous input modalities. In contrast, the proposed SKD framework effectively transfers representations learned from direct-coded models to guide event-based learning. Across a wide range of architectures, SKD consistently delivers superior validation accuracy, thereby establishing both a strong empirical baseline and a solid theoretical foundation for future research on efficient and robust SNN adaptation.

\bibliographystyle{IEEEtran}
\bibliography{refs}

\clearpage

\appendices

\section{Further discussion on related works}\label{apd:related_works}

Although D2E transfer bears superficial similarities to several established machine learning paradigms such as task-specific finetuning (TSF) \cite{han2024parameter}, domain adaptation \cite{ben2006analysis}, multitask learning \cite{zhang2021survey} and machine unlearning \cite{nguyen2025survey}, it is fundamentally distinct in both objective and formulation:

\begin{itemize}
    \item \textit{Difference from task-specific finetuning:} D2E transfer is most closely related to TSF, where a pretrained model is adapted to a new dataset. However, unlike conventional finetuning, which typically involves changes in both input and output domains, D2E preserves the task and label space and differs only in input representation (direct-coded versus event-based). This representation shift, rather than a task shift, distinguishes D2E from standard finetuning.

    \item \textit{Difference from domain adaptation:} While D2E involves two distinct input domains, its objective differs from classical domain adaptation, which aims for robust generalization across source and target domains \cite{ben2006analysis}. In contrast, D2E focuses exclusively on maximizing performance in the event-based domain. Nevertheless, theoretical insights from domain adaptation remain applicable and are leveraged in our proposed method (Section~\ref{sec:proposed_method}), positioning D2E transfer between domain adaptation and TSF.

    \item \textit{Difference from multitask learning:} Multitask learning targets simultaneous optimization over multiple tasks and often multiple target domains \cite{zhang2021survey}. D2E transfer, by contrast, addresses a single task and a single target domain (event-based input), without requiring balanced performance across domains.

    \item \textit{Difference from machine unlearning:} Machine unlearning aims to remove the influence of specific data from a trained model \cite{nguyen2025survey}. Although D2E transfer may involve partial attenuation of direct-coded representations, its goal is not forgetting but efficient performance transfer to the event-based domain. Consistent with findings in domain adaptation, in-domain generalization (direct-coded input) remains correlated with out-of-domain generalization (event-based input) \cite{miller2021accuracy}.
\end{itemize}

\subsection{Information-theoretic learning strategies for SNNs}

A recent line of work has explored information bottleneck (IB)
formulations as a principled training objective for spiking
neural networks. \cite{yang2023snib} propose SNIB, which replaces the standard linear IB loss with a nonlinear,
correntropy-based formulation that improves robustness and
discriminative ability under spike-based feature representations.
Building on this, the same authors introduce HOSIB~\cite{yang2023hosib},
a high-order surrogate-gradient framework whose second- and third-order
IB variants compress superfluous spike-based information while
preserving task-relevant content, jointly improving generalization,
robustness, and energy efficiency. SIBoLS~\cite{yang2023sibols}
further extends the IB framework by treating the membrane potential
itself as a learnable hidden variable, yielding noise-robust SNN
training across both static and neuromorphic datasets. In the
self-supervised regime, SeLHIB~\cite{yang2024selhib} applies
high-order IB to event-based optical flow estimation, demonstrating
that IB-style regularization scales to dense regression tasks on
real event-camera inputs.

These IB-based methods and our SKD framework share an underlying
goal (constraining the network to retain only task-relevant
information across the spiking representation) but operate at
different points in the training pipeline. IB approaches act as
\emph{intra-domain} regularizers, shaping the latent representation
during a \emph{single} training run on one input modality. SKD is
\emph{inter-domain}: it transfers a representation already learned
under direct code into the event domain, regularizing the
\emph{output distribution} of the event-coded student against its
direct-coded twin. The two paradigms are complementary; an IB
penalty applied during the SKD finetuning stage could in principle
further improve event-domain robustness, which we leave for future
investigation.

\subsection{Knowledge distillation for SNN}

Knowledge distillation has been adapted to the SNN setting along
two dominant axes. The first uses a pretrained ANN as the teacher and
a target SNN as the student, motivated by the difficulty of training
SNNs from scratch with surrogate gradients. KDSNN~\cite{xu2023constructing} and
LaSNN~\cite{hong2025lasnn} introduce layer-wise feature alignment
between ANN and SNN, while BKDSNN~\cite{xu2024bkdsnn} addresses the
distribution mismatch between continuous ANN features and discrete
SNN features by applying random blur to SNN intermediate features
before alignment. More recent work explicitly recognizes that ANN
and SNN logits live in different distributional regimes: SAMD/NLD
\cite{liu2026closer} aligns saliency maps and smooths SNN logits with
Gaussian noise to ease KD, and HTA-KL~\cite{zhang2025head} combines
forward and reverse KL divergences to better transfer both head and
tail probability mass under low timestep budgets. The second axis
uses a larger SNN as teacher to a smaller SNN student, mostly for
model compression rather than representational shift.

D2E transfer is structurally distinct from both axes. The teacher
and student in SKD share an architecture and a label space and differ
\emph{only in input coding}, a setting that (to our knowledge) is
not addressed by any prior SNN distillation work. The KD literature
on SNNs has focused on bridging the discrete-versus-continuous
representational gap between ANNs and SNNs, whereas D2E faces a
discrete-versus-discrete gap induced by the encoding map $e(\cdot)$.
This distinction is non-cosmetic: the input information loss
characterized in Appendix~B holds even when teacher and student are
both spiking, so techniques such as feature blurring or noise-smoothed
logits do not directly apply. Conversely, SKD's self-distillation
construction sidesteps the teacher-student capacity mismatch that
motivates much of the existing SNN-KD machinery, since the two networks
are weight-tied at initialization.

\subsection{ANN-to-SNN conversion and the D2E boundary}

A parallel research thread converts pretrained ANNs into
SNNs by replacing ReLU activations with spiking neurons and
calibrating thresholds to match firing rates with ANN
activations~\cite{diehl2015fast,li2021free}. Burst-spike
\cite{li2022efficient}, offset-spike correction~\cite{hao2023bridging},
and adaptive layer-wise calibration~\cite{wang2025adaptive} reduce the
conversion error that arises because a finite-timestep SNN cannot
exactly reproduce a real-valued ANN activation. These methods almost
universally assume rate coding and aim to reproduce the ANN's
analog output through accumulated spike counts.

D2E transfer occupies a different position in the design space.
The source model is already an SNN (not an ANN) so the difficulty is
not the ANN-to-spike representational gap but the
\emph{intra-SNN encoding shift} from a real-valued direct input
to a binary event input. Direct-coded SNNs evade the ANN-to-SNN
conversion error by construction (their first-layer input is
real-valued and computed once), but inherit it at the
\emph{deployment} boundary if the target hardware expects event-based
inputs. Existing ANN-to-SNN calibration techniques are not directly
transplantable: there is no analog teacher activation to match
against, and the spike-rate matching objective central to those
methods is incompatible with TTFS, where each neuron fires at most
once. SKD complements rather than competes with this literature by
addressing a transfer problem that arises specifically when the
upstream training pipeline has already committed to direct coding
and the deployment pipeline demands event coding.

\section{Proof for Theorem \ref{theorem:main}}\label{proof:theorem:main}
The following inequality is always true:
\begin{equation}
    \left|\begin{aligned}
            & \mathrm{acc}_{\mathbf{X}}(f_{\mathbf{X}})\\
            & -\mathrm{acc}_{\mathbf{S}}(f_{\mathbf{S}}) \\
        \end{aligned}\right|
    \le \left|\begin{aligned}
            & \mathrm{acc}_{\mathbf{X}}(f_{\mathbf{X}})\\
            & -\mathrm{acc}_{\mathbf{X}}(f_{\mathbf{S}}) \\
        \end{aligned}\right| + 
    \left|\begin{aligned}
            & \mathrm{acc}_{\mathbf{X}}(f_{\mathbf{X}})\\
            & -\mathrm{acc}_{\mathbf{S}}(f_{\mathbf{S}}) \\
        \end{aligned}\right|.
    \label{eq:inq}
\end{equation}
For any output $y$ belong to the output space $\mathcal{Y}$:
\[
    \begin{aligned}
        & \begin{aligned}
           \mathbb{E}_{y\mid x}\left|\begin{aligned}
               & f_{\mathbf{X}}(y\mid x)\\
               & -f_{\mathbf{S}}(y\mid x) \\
           \end{aligned}\right| \le &  \sum_{y\in \mathcal{Y}} |f_{\mathbf{X}}(y\mid x)-f_{\mathbf{S}}(y\mid x)|\\
           & = 2\,\mathrm{TV}\left(f_{\mathbf{X}}(\cdot\mid x), f_{\mathbf{S}}(\cdot\mid x)\right).\\
        \end{aligned} \\
        \Rightarrow \, & \begin{aligned} 
        \left| \begin{aligned}
            &\mathbb{E}_{y\mid x}[f_{\mathbf{X}}(y\mid x)]\\
            & -\mathbb{E}_{y\mid x}[f_{\mathbf{S}}(y\mid x)]
        \end{aligned}\right| & \leq \mathbb{E}_{y\mid x}\left|f_{\mathbf{X}}(y\mid x)-f_{\mathbf{S}}(y\mid x)\right| \\
        & \le 2\,\mathrm{TV}\left(f_{\mathbf{X}}(\cdot\mid x), f_{\mathbf{S}}(\cdot\mid x)\right).\\
        & (\text{Jensen's inequality})
        \end{aligned}
    \end{aligned}
\]
Pinsker's inequality stated that:
\[
\mathrm{TV}\left(f_{\mathbf{X}}(\cdot\mid x), f_{\mathbf{S}}(\cdot\mid x)\right)
\le \sqrt{\tfrac12\,D_\mathrm{KL}\left(f_{\mathbf{X}}(\cdot\mid x), f_{\mathbf{S}}(\cdot\mid x)\right)}.
\]
Therefore we can bound the first term on the right side of Equation \ref{eq:inq} using a combination of Jensen's and Pinsker's inequality as:
\begin{equation}
    \begin{aligned}
        & \begin{aligned} 
        \left| \begin{aligned}
        &\mathbb{E}_{x \sim \mathbf{X}}\mathbb{E}_{y\mid x}[f_{\mathbf{X}}(y\mid x)]\\
        & -\mathbb{E}_{x \sim \mathbf{X}}\mathbb{E}_{y\mid x}[f_{\mathbf{S}}(y\mid x)]
    \end{aligned}\right| & \leq \mathbb{E}_{x \sim \mathbf{X}}\left[\left| \begin{aligned}
        &\mathbb{E}_{y\mid x}[f_{\mathbf{X}}(y\mid x)]\\
        & -\mathbb{E}_{y\mid x}[f_{\mathbf{S}}(y\mid x)]
    \end{aligned}\right|\right] \\
    & \le \mathbb{E}_{x \sim \mathbf{X}} 
     \sqrt{\tfrac{D_\mathrm{KL}\left(\begin{aligned}
         & f_{\mathbf{X}}(\cdot\mid x)\\
         & , f_{\mathbf{S}}(\cdot\mid x)\\
     \end{aligned}\right)}{2}},\\
    \end{aligned} \\
        \Rightarrow \, & \left|\begin{aligned}
            & \mathrm{acc}_{\mathbf{X}}(f_{\mathbf{X}})\\
            & -\mathrm{acc}_{\mathbf{X}}(f_{\mathbf{S}}) \\
        \end{aligned}\right| \le \mathbb{E}_{x \sim \mathbf{X}} \left[
     \sqrt{\tfrac12\,D_\mathrm{KL}\left(f_{\mathbf{X}}(\cdot\mid x), f_{\mathbf{S}}(\cdot\mid x)\right)}\right].\\
    \Rightarrow \, & \left|\begin{aligned}
            & \mathrm{acc}_{\mathbf{X}}(f_{\mathbf{X}})\\
            & -\mathrm{acc}_{\mathbf{X}}(f_{\mathbf{S}}) \\
        \end{aligned}\right|
    \le \sqrt{
    \tfrac12\,\mathbb{E}_{x\sim \mathbf{X}}
    \left[D_\mathrm{KL}\left(f_{\mathbf{X}}(\cdot\mid x), f_{\mathbf{S}}(\cdot\mid x)\right)\right]}.
    \end{aligned}
\label{eq:res1}
\end{equation}
We then decompose the second term on the right side of Equation \ref{eq:inq} as:
\begin{equation}
    \begin{aligned}
        \left|\begin{aligned}
            & \mathrm{acc}_{\mathbf{X}}(f_{\mathbf{X}})\\
            & -\mathrm{acc}_{\mathbf{S}}(f_{\mathbf{S}}) \\
        \end{aligned}\right| & = \left|
        \begin{aligned}
            & \mathbb{E}_{x \sim \mathbf{X}}\left[\mathbb{E}_{y\mid x}\left[f_{\mathbf{X}}(y\mid x)\right]\right]\\
            & -\mathbb{E}_{x \sim \mathbf{S}}[\mathbb{E}_{y\mid x}\left[f_{\mathbf{S}}(y\mid x)\right]]\\
        \end{aligned}
        \right|\\
      & \le 2\,\mathrm{TV}(\mathbf{X}, \mathbf{S}). \\
    \end{aligned}
    \label{eq:res2}
\end{equation}
Equation \ref{eq:res2} is obtained based on the dual representation property of TV for any 2 distribution $P$ and $Q$, given bounded function $g \in [0, 1]$ as \cite{pollard2002user}:
\[
    \mathrm{TV}(P,Q) = \frac{1}{2}\sup_{\|g\|_{\infty} \le 1}\left|\mathbb{E}_{P}[g]-\mathbb{E}_{Q}[g]\right|.
\]
Combining derivation results from Equation \ref{eq:res1} and \ref{eq:res2} to Equation \ref{eq:inq} gives:
\[
    \begin{aligned}
        \left|\begin{aligned}
            & \mathrm{acc}_{\mathbf{X}}(f_{\mathbf{X}})\\
            & -\mathrm{acc}_{\mathbf{S}}(f_{\mathbf{S}}) \\
        \end{aligned}\right| \le & \sqrt{\tfrac12\,\mathbb{E}_{x\sim \mathbf{X}}\left[D_\mathrm{KL}\left(f_{\mathbf{X}}(\cdot\mid x)\,\|\,f_{\mathbf{S}}(\cdot\mid x)\right)\right]}\\
        & + 2\,\mathrm{TV}(\mathbf{X}, \mathbf{S})
    \end{aligned}
\]
and hence, conclude our proof.

\section{Empirical visualization of the KL bound}
\label{app:kl-viz}

This appendix illustrates the operational behavior of Theorem~\ref{theorem:main}
during SKD training and compares the KL-divergence distillation loss with
alternative choices to justify the design of SKD. All experiments use
Spiking-VGG11 on CIFAR-10 with settings similarly utilized as in the main results. Instead of using single-shot finetuning, we instead finetune for 50 epochs on all experiments (only within this section) to obtain statistics for analysis.

\subsection{KL trajectory and accuracy-gap closure}

\begin{figure}[t]
  \centering
  \includegraphics[width=0.95\linewidth]{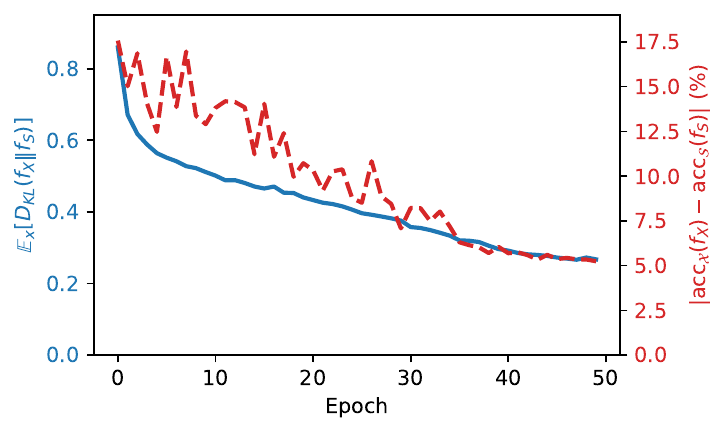}
  \caption{Expected KL divergence (left axis, solid blue) and cross-domain
  accuracy gap $|\mathrm{acc}_{\mathbf{X}}(f_\mathbf{X}) -
  \mathrm{acc}_{\mathbf{S}}(f_\mathbf{S})|$ (right axis, dashed red) during SKD training
  on Spiking-VGG11 / CIFAR-10. Pearson $r = 0.925$ across the 50 epochs,
  empirically validating Theorem~\ref{theorem:main}.}
  \label{fig:kl-vs-gap}
\end{figure}

Figure~\ref{fig:kl-vs-gap} plots the per-epoch expected KL divergence
$\mathbb{E}_x[D_{\mathrm{KL}}(f_\mathbf{X}(\cdot|x)\,\|\,f_\mathbf{S}(\cdot|x))]$ against the
realised cross-domain accuracy gap $|\mathrm{acc}_\mathbf{X}(f_\mathbf{X}) -
\mathrm{acc}_\mathbf{S}(f_\mathbf{S})|$ during SKD training. Both quantities decrease
monotonically: the KL term contracts from $0.86$ at initialisation (where the
student receives event-coded inputs through direct-coded weights) to $0.27$ at
convergence, while the Theorem~\ref{theorem:main} gap closes from $17.57$\,pp to $5.24$\,pp. The Pearson correlation between the two quantities across the 50-epoch trajectory is $r = 0.925$, providing strong empirical support for the structure of
Theorem~\ref{theorem:main}: minimizing the KL term acts as a faithful
surrogate for tightening the cross-domain accuracy gap.

\subsection{Tightness of the Theorem~\ref{theorem:main} bound}

\begin{figure}[t]
  \centering
  \includegraphics[width=0.95\linewidth]{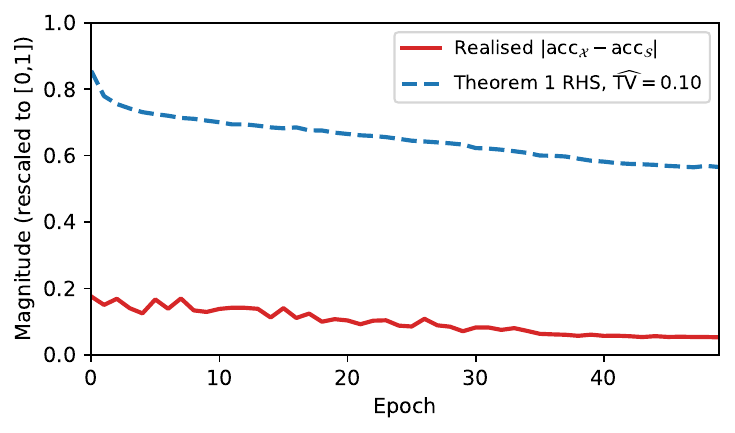}
  \caption{Theorem~\ref{theorem:main} right-hand side (RHS) vs.\ realised
  cross-domain accuracy gap (rescaled to $[0,1]$) across epochs. The bound is
  always above the realised quantity, and both decrease monotonically.}
  \label{fig:bound-tightness}
\end{figure}

To assess how tight the bound is in practice, Figure~\ref{fig:bound-tightness}
overlays the realised accuracy gap (rescaled to $[0,1]$) with the Theorem-1
right-hand side $\sqrt{\tfrac{1}{2}\mathbb{E}_x[D_{\mathrm{KL}}]} +
2\,\mathrm{TV}(\mathbf{X},\mathbf{S})$, where the total-variation term is
approximated by a fitted constant $\widehat{\mathrm{TV}} = 0.10$. The bound
contracts monotonically from $0.86$ to $0.57$ while the realised gap shrinks
from $0.18$ to $0.05$; the bound remains a strict upper envelope throughout
training. The gap between bound and realised quantity reflects two
sources of looseness: Pinsker's inequality is tight only in the
small-divergence regime, and the additive TV term is not minimized by SKD's
objective. The relevant claim of Theorem~\ref{theorem:main} is that the bound
is a \emph{sound} surrogate (it decreases when the gap decreases) not that it is
asymptotically tight, and Figure~\ref{fig:bound-tightness} confirms this
soundness empirically.

\subsection{Comparison with alternative distillation losses}

\begin{figure}[t]
  \centering
  \includegraphics[width=0.95\linewidth]{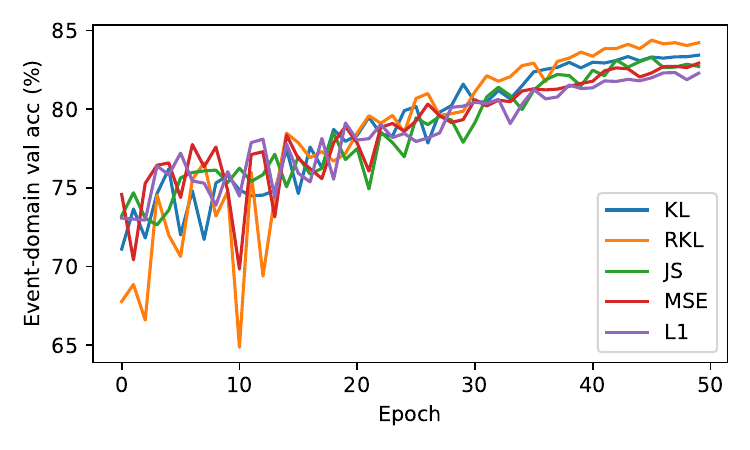}
  \caption{Validation accuracy curves for SKD trained with five distillation
  losses on Spiking-VGG11 over CIFAR-10. All distillation variants converge to within a $\sim 2$\,pp band of one another.}
  \label{fig:loss-ablation}
\end{figure}

\begin{table}[t]
\centering
\caption{Distillation-loss ablation of Spiking-VGG11 on CIFAR-10. Best metric denoted in bold, second best is underlined.}
\label{tab:loss-ablation}
\resizebox{\width}{!}{\begin{tabular}{lcc}
\toprule
\shortstack{Distillation\\loss type} & \shortstack{TTFS\\Acc. (\%)} & $D_{KL}$ \\
\midrule
$\ell_1$ on softmax            & 82.16 & 0.310 \\
MSE on softmax                 & 82.67 & 0.312 \\
Jensen-Shannon                & 82.84 & 0.301 \\
Reverse KL                     & \textbf{84.21} & 0.320 \\
\bf SKD (ours)         & \underline{83.34} & \textbf{0.270} \\
\bottomrule
\end{tabular}}
\end{table}

To justify the KL-divergence choice in the SKD objective, we re-train SKD
with four alternative distillation losses while keeping all other
hyperparameters fixed: reverse KL, the Jensen-Shannon divergence, mean-squared error on softmax probabilities, and $\ell_1$ on softmax probabilities. Table~\ref{tab:loss-ablation} reports event-domain validation accuracy alongside the average forward KL divergence at convergence. Some observations follow:
\begin{itemize}
    \item Forward KL achieves the lowest converged KL divergence by
    construction (since it is the quantity being minimized) and is competitive with
    the best-performing variant in event-domain accuracy. Reverse KL achieves
    slightly higher final accuracy ($+0.87$\,pp) at the cost of a $19\%$ larger
    forward-KL divergence, suggesting that its mode-seeking behavior produces a
    mildly better classifier without preserving teacher--student distributional
    similarity as faithfully.
    \item MSE and $\ell_1$ on softmax are close to but slightly below
    forward KL, consistent with the theoretical observation that
    probability-space metrics underweight the tails of the teacher
    distribution~\cite{kim2021comparing}, which is precisely the dark knowledge
    that distillation seeks to transfer.
\end{itemize}

The KL choice in SKD is justified on three grounds. First, it admits the
closed-form bound in Theorem~\ref{theorem:main} via Pinsker's inequality, an
analytical guarantee that the alternatives do not provide. Second, it
minimizes the very quantity that appears in that bound, by construction.
Third, its empirical performance is within $1$\,pp of the best alternative,
making it a Pareto-reasonable choice that combines theoretical tractability
with competitive accuracy. We note that reverse KL's modest empirical edge
suggests a promising direction for future work: a symmetric or adaptive
formulation that retains the forward-KL bound while exploiting reverse KL's
mode-seeking advantage may yield further gains in D2E transfer.

\section{Information-theoretic and distributional effects of D2E conversion}
\label{app:d2e-effects}

In this appendix, we provide a more detailed theoretical exposition of why
naively applying a direct-coded SNN to event-based input causes the severe
performance collapse observed in Tables~\ref{tab:perf_compare}, \ref{tab:dvs_compare} and \ref{tab:imn_compare}. We identify three coupled effects:
(i) information loss induced by the encoding map $e : \mathbf{X} \to \mathbf{S}$,
(ii) shifts in the distributions of intermediate spike trains, and (iii) a
mismatch between the weight statistics learned under direct code and the
pre-activation statistics induced by event code.

\subsection{Information loss in the encoding map}
\label{app:info-loss}

Direct code preserves each input pixel as a real value $x_i \in [0,1]$ that is
replicated across all $T$ timesteps. TTFS, by contrast, maps each pixel to at
most one spike time drawn from a finite alphabet of size $T+1$ (including the
``no-spike'' symbol). We formalize the resulting capacity gap as follows.

\begin{theorem}[Encoding capacity bound]
\label{thm:capacity}
Let $\mathbf{X} \in [0,1]^d$ be a direct-coded input and $\mathbf{S} = e(\mathbf{X}) \in \{0,1\}^{T \times d}$
its TTFS encoding under a deterministic encoder satisfying the at-most-one-spike
constraint $\sum_{t=1}^{T} \mathbf{S}_{t,i} \leq 1$ for all $i$. Then the mutual
information satisfies
\[
  I(\mathbf{X}; \mathbf{S}) \leq d \log_2 (T+1),
\]
whereas $H(\mathbf{X})$ is unbounded in continuous representation. Consequently, the
encoder $e$ is a strict information-reducing map, and
\[
  I(\mathbf{X};Y) - I(\mathbf{S};Y) \geq H(\mathbf{X} \mid \mathbf{S}) - H(\mathbf{X} \mid Y, \mathbf{S}) \geq 0
\]
for any task variable $Y$ jointly distributed with $\mathbf{X}$.
\end{theorem}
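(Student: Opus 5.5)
The plan is to use only the finiteness of the TTFS alphabet together with the chain rule for mutual information; no result from earlier in the paper is needed.

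\textbf{Capacity bound.} First I will count the values $\mathbf{S}$ can take. Under the at-most-one-spike constraint, each column $\mathbf{S}_{\cdot,i}\in\{0,1\}^T$ is either all zeros or a single indicator vector $\mathbf{e}_t$, $t=1,\dots,T$. That gives exactly $T+1$ admissible patterns per coordinate, so $|\mathrm{range}(\mathbf{S})|\le (T+1)^d$.

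Since $\mathbf{S}$ is discrete, $I(\mathbf{X};\mathbf{S}) = H(\mathbf{S}) - H(\mathbf{S}\mid\mathbf{X}) \le H(\mathbf{S})$. The uniform distribution bound then gives $H(\mathbf{S})\le \log_2 (T+1)^d = d\log_2(T+1)$. Determinism of $e$ actually gives $H(\mathbf{S}\mid \mathbf{X})=0$, so $I(\mathbf{X};\mathbf{S})=H(\mathbf{S})$, but only the inequality is needed.

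\textbf{Unboundedness and strictness.} For the remark that $H(\mathbf{X})$ is unbounded, I will note the following. If $\mathbf{X}$ has a continuous (or merely non-finitely-supported) law on $[0,1]^d$, then its Shannon entropy, taken as the limit of entropies of ever finer quantizations, diverges. In particular $I(\mathbf{X};\mathbf{X})=H(\mathbf{X})$ can exceed $d\log_2(T+1)$, while $I(\mathbf{X};\mathbf{S})$ cannot. So $e$ cannot preserve all information, and is therefore strictly information-reducing. I will state this under the explicit assumption that the law of $\mathbf{X}$ is not supported on at most $(T+1)^d$ points, because otherwise $e$ could be injective on the support and the reduction would not be strict.

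\textbf{Task-information gap.} Because $\mathbf{S}=e(\mathbf{X})$, we have $I(\mathbf{X};Y)=I(\mathbf{X},\mathbf{S};Y)$. The chain rule then gives
\[
I(\mathbf{X};Y)=I(\mathbf{S};Y)+I(\mathbf{X};Y\mid \mathbf{S}).
\]
Expanding the conditional term yields $I(\mathbf{X};Y\mid\mathbf{S})=H(\mathbf{X}\mid\mathbf{S})-H(\mathbf{X}\mid Y,\mathbf{S})$. This gives the first claimed inequality, in fact with equality. Nonnegativity of conditional mutual information gives the final $\ge 0$, which is just the data-processing inequality for the Markov chain $Y-\mathbf{X}-\mathbf{S}$.

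\textbf{Main obstacle.} The delicate step is making sense of $H(\mathbf{X}\mid\mathbf{S})$ and $H(\mathbf{X}\mid Y,\mathbf{S})$ when $\mathbf{X}$ is continuous, since each term may be infinite or, as differential entropy, negative. I would first interpret both as conditional differential entropies. Their difference is exactly the conditional mutual information $I(\mathbf{X};Y\mid\mathbf{S})$, which is defined intrinsically (as a supremum over finite partitions, or as a KL divergence) and is always nonnegative. If that fails because the differential entropies do not exist, I would fall back on finite quantizations $\mathbf{X}_n$ of $\mathbf{X}$ that refine the partition induced by $e$, so that $\mathbf{S}$ remains a function of $\mathbf{X}_n$. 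I would prove the identity for each $\mathbf{X}_n$, where all entropies are finite, and then pass to the limit $n\to\infty$ using monotone convergence of mutual information under refinement.
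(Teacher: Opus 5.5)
Your proposal is correct. For the capacity bound it follows the paper's argument: count the $T+1$ admissible columns, bound $H(\mathbf{S})$ by the log of the range size, and use $I(\mathbf{X};\mathbf{S})\le H(\mathbf{S})$. Unlike the paper, your count does not lean on the columns being ``independent components.'' That claim is false for correlated pixels and is not needed anyway.

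For the second display your route differs in a way that matters. The paper only cites the data-processing inequality for $Y\to\mathbf{X}\to\mathbf{S}$. That yields the outer inequality $I(\mathbf{X};Y)\ge I(\mathbf{S};Y)$ and says nothing about the middle term. Your chain-rule computation $I(\mathbf{X};Y)=I(\mathbf{X},\mathbf{S};Y)=I(\mathbf{S};Y)+I(\mathbf{X};Y\mid\mathbf{S})$ is the usual proof of the DPI, and it buys three things:
\begin{itemize}
\item it identifies the middle expression as $I(\mathbf{X};Y\mid\mathbf{S})$;
\item it shows the first inequality is actually an equality;
\item through your quantization argument, it gives that expression a meaning when $\mathbf{X}$ is continuous and both conditional entropies are infinite. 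The paper never addresses this.
\end{itemize}

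The paper also gives no proof of strictness. You do, but fix one slip. A countably supported law can have finite entropy, so ``merely non-finitely-supported'' does not make the quantized entropies diverge. In the purely atomic case with more than $(T+1)^d$ atoms, $H(\mathbf{X})$ need not exceed $d\log_2(T+1)$. Argue strictness by pigeonhole instead: two atoms share a codeword, so $H(\mathbf{X}\mid\mathbf{S})>0$ and hence $I(\mathbf{X};\mathbf{S})<H(\mathbf{X})$.
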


\begin{proof}
Each component $\mathbf{S}_{:,i} \in \{0,1\}^T$ under the at-most-one-spike constraint
takes values in a set of cardinality $T+1$ (one ``empty'' codeword plus $T$
single-spike codewords). Since $\mathbf{S}$ has $d$ such independent components,
$|\mathbf{S}| \leq (T+1)^d$ and $H(\mathbf{S}) \leq d \log_2 (T+1)$. The mutual
information bound follows from $I(\mathbf{X};\mathbf{S}) \leq H(\mathbf{S})$. The second inequality is the
data-processing inequality applied to the Markov chain $Y \to \mathbf{X} \to \mathbf{S}$.
\end{proof}

For the configurations used in this work ($T = 8$, $d = 32 \times 32 \times 3 = 3072$
on CIFAR-10), Theorem~\ref{thm:capacity} yields $I(\mathbf{X};\mathbf{S}) \leq 9{,}739$ bits per
image, compared to the $\sim 24{,}576$ bits required to losslessly represent an
8-bit RGB image. Roughly $60\%$ of the input entropy is therefore unrecoverable
from the event representation alone, regardless of the downstream classifier.
This establishes a fundamental ceiling on event-based accuracy that no amount
of finetuning can overcome.

\subsection{Spike distribution shift across layers}
\label{app:spike-shift}

Beyond the input alphabet change, the encoding shift propagates through the
network and alters the firing statistics at every layer. Let $r_l^{\mathrm{dir}}$
and $r_l^{\mathrm{evt}}$ denote the time-averaged firing rates of layer $l$
under direct and event code respectively. We write each LIF neuron's
pre-activation as $z_l^{(t)} = W_l\, s_{l-1}^{(t)} + b_l$ where $s_{l-1}^{(t)}$
is the spike (or replicated input) entering layer $l$ at timestep $t$.

\begin{proposition}[Layer-1 magnitude collapse under TTFS]
\label{prop:layer1-collapse}
Assume input pixels $x_i$ are i.i.d.\ with mean $\mu_x$ and variance $\sigma_x^2$,
and that the TTFS encoder fires each pixel exactly once across $T$ steps with
uniform timing. Then for any first-layer pre-activation entry $z_1^{(t)}$,
\[
    \begin{aligned}
        & \mathbb{E}[z_1^{(t)} \mid \mathrm{direct}] = \mu_x \sum_j W_{1,j} + b_1,\\
        & \mathbb{E}[z_1^{(t)} \mid \mathrm{TTFS}] = \tfrac{1}{T} \mu_x \sum_j W_{1,j} + b_1.\\
    \end{aligned}
\]
The pre-activation mean shrinks by a factor of $T$ at each individual timestep.
\end{proposition}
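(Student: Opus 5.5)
The plan is to compute both expectations directly from linearity, after first making the encoding models for the two codes explicit.

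Under direct code, the input entering layer 1 at every timestep $t$ is the replicated pixel vector, so $s_0^{(t)} = x$ for all $t$. Under TTFS, each pixel emits exactly one spike at a time $\tau_i$ that is uniform on $\{1,\dots,T\}$, and $s_{0,i}^{(t)} = \mathbf{1}[\tau_i = t]$. To reproduce the stated formula, the spike must carry amplitude $x_i$, i.e.\ $s_{0,i}^{(t)} = x_i\,\mathbf{1}[\tau_i=t]$. The alternative, a pure binary spike whose timing depends on $x_i$, would give a mean of $\tfrac1T\sum_j W_{1,j}$ with no $\mu_x$ factor. I will therefore state the amplitude-weighted (value-at-spike-time) reading as the interpretation under which the proposition holds. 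I will also assume that $\tau_i$ is independent of $x_i$, as the phrase ``uniform timing'' suggests.

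The direct case is immediate. By linearity, $\mathbb{E}[z_1^{(t)}] = \sum_j W_{1,j}\mathbb{E}[x_j] + b_1 = \mu_x\sum_j W_{1,j} + b_1$. Here $W_{1,j}$ ranges over the receptive-field weights of the fixed output entry, and $b_1$ is the corresponding bias.

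For the TTFS case, I use independence of $\tau_j$ and $x_j$ to factor each term: $\mathbb{E}[x_j\mathbf{1}[\tau_j=t]] = \mu_x\Pr(\tau_j=t) = \mu_x/T$. Summing gives $\tfrac1T\mu_x\sum_j W_{1,j} + b_1$. Taking the ratio of the weight-dependent parts then yields the factor-$T$ shrinkage, which is the final claim. As a sanity check, summing over $t$ recovers the direct-code mean, matching the remark $T r_{\mathrm{in}}=1$ in Section~\ref{sec:energy_estimation}. The i.i.d.\ and variance assumptions are not needed for the means; $\sigma_x^2$ matters only if one extends the result to second moments.

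The main obstacle is not the computation but settling the modeling assumption, namely the amplitude and the independence of spike timing from pixel value. If timing instead encodes the value, as in genuine TTFS where $\tau_i$ is decreasing in $x_i$, the factorization fails. In that case I would fall back to the weaker statement that the bias-free mean, summed over time, is conserved, while each individual timestep receives only a $1/T$ share on average.
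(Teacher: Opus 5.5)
Your computation is the same linearity argument the paper uses: write $z_1^{(t)}=\sum_j W_{1,j}s_{0,j}^{(t)}+b_1$ and compute $\mathbb{E}[s_{0,j}^{(t)}]$ under each code. The direct case is identical. The difference is where $\mu_x$ enters the TTFS mean, and here you have spotted something the paper's proof glosses over. With binary spikes fired exactly once at a uniform time, $\mathbb{E}[\mathbf{S}_{t,j}]=\Pr[\tau_j=t]=1/T$, so under the literal hypotheses the stated TTFS formula holds only when $\mu_x=1$.

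The paper's proof nevertheless writes $\mathbb{E}[\mathbf{S}_{t,j}]=\Pr[\text{pixel } j \text{ fires at } t]=\mu_x/T$. This tacitly replaces ``exactly once'' by ``fires with probability $x_j\in[0,1]$ (hence at most once), at a uniformly random time''. Then $\Pr[\mathbf{S}_{t,j}=1\mid x_j]=x_j/T$, and the tower rule gives $\mu_x/T$. You instead keep ``exactly once'' and put $x_j$ into the spike amplitude.

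Both repairs make the identity true, but yours is the less faithful one, for three reasons:
\begin{itemize}
\item The paper's event code is strictly binary (Definition~\ref{def:d2e}, and $\mathbf{S}\in\{0,1\}^{T\times d}$ in Theorem~\ref{thm:capacity}).
\item The proposition is meant to explain what direct-trained weights do with binary spikes.
\item The AC-only first layer in the energy model presupposes binary input.
\end{itemize}
A spike of height $x_j$ is a time-gated copy of the direct input, not an event. Adopt the Bernoulli reading explicitly instead, with spike time uniform and independent of $x_j$ given firing. It keeps $\mathbf{S}_{t,j}\in\{0,1\}$ and the at-most-one-spike constraint, and your two-line computation goes through with $\mathbb{E}[\mathbf{S}_{t,j}]=\mathbb{E}[x_j]/T$.

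Your fallback for value-dependent timing should be dropped or restricted. For genuine TTFS with binary spikes fired exactly once, $\sum_t \mathbf{S}_{t,j}=1$ deterministically. The time-summed bias-free input is therefore $\sum_j W_{1,j}$, not $\mu_x\sum_j W_{1,j}$. The ``conservation'' you describe holds only under your amplitude model, which genuine TTFS is not.
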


\begin{proof}
Under direct code, $s_0^{(t)} = x$ for all $t$, so
$\mathbb{E}[z_1^{(t)}] = \sum_j W_{1,j}\,\mathbb{E}[x_j] + b_1$. Under TTFS, the
expected spike at component $j$ at timestep $t$ is
$\mathbb{E}[\mathbf{S}_{t,j}] = \Pr[\text{pixel } j \text{ fires at } t] = \mu_x / T$
under uniform-time encoding, giving
$\mathbb{E}[z_1^{(t)}] = (\mu_x/T)\sum_j W_{1,j} + b_1$.
\end{proof}

Proposition~\ref{prop:layer1-collapse} explains the catastrophic baseline drop
quantitatively: a LIF neuron whose firing threshold $V_{\mathrm{th}}$ was
calibrated against direct-coded pre-activations of magnitude
$\mu_x \sum_j W_{1,j}$ now receives instantaneous inputs that are smaller by a
factor of $T$. With $T = 8$ and a typical threshold $V_{\mathrm{th}} = 1.0$, a
neuron that fires at rate $r_1^{\mathrm{dir}} \approx 0.5$ under direct code is
expected to fire at rate
$r_1^{\mathrm{evt}} \lesssim 1/T = 0.125$ under TTFS even \emph{before}
considering the input information loss from
Section~\ref{app:info-loss}. This sparsity collapse compounds layerwise: if
each layer attenuates firing by a factor $\rho < 1$ relative to its
direct-coded counterpart, the deepest layer fires at rate
$\rho^L \cdot r_L^{\mathrm{dir}}$, which decays exponentially with depth $L$
and explains why deeper architectures (Spiking-VGG16, SEW-RN50) suffer the
largest baseline drops in Table~\ref{tab:perf_compare}.

\subsection{Weight-activation statistical mismatch}
\label{app:weight-mismatch}

The propositions above frame the problem entirely in terms of activation
statistics, but the dual perspective on the weights is equally informative. A
direct-coded SNN's weights $w^\star$ satisfy a first-order optimality condition
that depends on the input distribution:
\[
  \nabla_w \, \mathbb{E}_{\mathbf{X}}\left[\mathcal{L}_{\mathrm{CE}}(f_w(\mathbf{X}), y)\right]\Big|_{w^\star} = 0.
\]
Replacing $\mathbf{X}$ by $\mathbf{S} = e(\mathbf{X})$ violates this condition by an amount that we can
bound in terms of the encoding's effect on the input.

\begin{theorem}[Gradient mismatch]
\label{thm:grad-mismatch}
Let $g(w; \mathbf{X}) = \nabla_w \mathcal{L}_{\mathrm{CE}}(f_w(\mathbf{X}), y)$ and assume
$\mathcal{L}$ is $L$-smooth in its first argument and $f_w$ is $G$-Lipschitz in
$w$. Then at the direct-code optimum $w^\star$,
\[
  \big\|\mathbb{E}_\mathbf{S}[\, g(w^\star; \mathbf{S})\,]\big\|
   \leq L\, G \cdot \mathrm{TV}(\mathbf{\mathbf{X}}, \mathbf{S}),
\]
where $\mathrm{TV}$ is the total variation distance between the direct- and
event-coded input distributions on a common embedding.
\end{theorem}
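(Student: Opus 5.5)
The plan is to exploit the first-order optimality of $w^\star$ under direct code and then control the discrepancy between the two expectations with the same dual (variational) representation of total variation already used for Equation~\ref{eq:res2} in the proof of Theorem~\ref{theorem:main}. Both $\mathbf{X}$ and $\mathbf{S}=e(\mathbf{X})$ are regarded as distributions $P$ and $Q$ on a common embedding space $\mathcal{Z}$ (for instance, by flattening the replicated direct input and the spike tensor into $\mathbb{R}^{T\times d}$). The map $z\mapsto g(w^\star;z)$ is then a single vector-valued function on $\mathcal{Z}$. Since $\mathbb{E}_{P}[g(w^\star;\cdot)]=0$ by the stationarity condition preceding the theorem, we can write
\[
\mathbb{E}_{Q}[g(w^\star;\cdot)] = \mathbb{E}_{Q}[g(w^\star;\cdot)] - \mathbb{E}_{P}[g(w^\star;\cdot)],
\]
so it suffices to bound a difference of expectations of one fixed function under $P$ and $Q$.

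The first step is to reduce to scalar test functions. Write $v=\mathbb{E}_Q[g]-\mathbb{E}_P[g]$. For any unit vector $u$, set $\phi_u(z)=\langle u, g(w^\star;z)\rangle$. Taking $u=v/\|v\|$ gives $\|v\| = \mathbb{E}_Q[\phi_u]-\mathbb{E}_P[\phi_u]$, so only one scalar function needs to be controlled.

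The second step bounds $\phi_u$ uniformly. By the chain rule, $g(w;z) = J_w f_w(z)^\top \nabla_1\mathcal{L}(f_w(z),y)$. The $G$-Lipschitz assumption on $f_w$ in $w$ gives $\|J_w f_w(z)\|_{\mathrm{op}}\le G$. For the loss factor I would use the $L$-regularity of $\mathcal{L}$ to get $\|\nabla_1\mathcal{L}\|\le L$ on the relevant range of outputs. This is automatic for cross-entropy on softmax outputs, where the gradient $p-e_y$ has norm at most $\sqrt2$. Together these give $|\phi_u(z)|\le LG$ for every $z$.

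The third step applies the dual representation $\mathrm{TV}(P,Q)=\tfrac12\sup_{\|h\|_\infty\le1}|\mathbb{E}_P h-\mathbb{E}_Q h|$ to $h=\phi_u/(LG)$. This yields $\|v\|\le 2LG\,\mathrm{TV}(\mathbf{X},\mathbf{S})$. A sharper route uses only the oscillation of $\phi_u$: if the range of $\phi_u$ lies in an interval of length $LG$, then subtracting the midpoint of that interval (which does not change the difference of expectations) gives $\|v\|\le LG\,\mathrm{TV}(\mathbf{X},\mathbf{S})$, which is the stated constant.

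I expect the main obstacle to be the second step. Smoothness of $\mathcal{L}$ alone controls the Lipschitz constant of $\nabla_1\mathcal{L}$, not its size, so a uniform bound on $g$ needs one of two extra ingredients. The first option, which I would try first, is the bounded-gradient structure of softmax cross-entropy, reading $L$ as that gradient bound. The second option is to combine $L$-smoothness with a bounded output range $\|f_w(z)-f_w(z')\|\le 1$ and $\nabla_1\mathcal{L}$ vanishing at some reference output; this bounds the oscillation by $LG$ and gives the exact constant in the statement. The surrogate-gradient definition of $J_w f_w$ for the spiking nonlinearity must also be the one to which the $G$-Lipschitz assumption refers. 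I would state that convention explicitly rather than prove it.
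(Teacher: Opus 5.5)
Your route is the paper's own proof sketch. The sketch also uses $\mathbb{E}_{\mathbf{X}}[g(w^\star;\mathbf{X})]=0$ and then applies the dual form of total variation to $h=g(w^\star;\cdot)/(LG)$, asserting $\|h\|_\infty\le 1$ ``by Lipschitz continuity of the loss-network composition.'' Your scalarization $\phi_u=\langle u,g\rangle$ supplies a step the sketch omits, since the dual formula is for scalar test functions. You are also right that $L$-smoothness controls the variation of $\nabla_1\mathcal{L}$, not its size. The paper silently reads $L$ as a bound on $\|\nabla_1\mathcal{L}\|$, which is your first option.

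The genuine gap is the constant. From $|\phi_u|\le LG$ the dual representation gives only $\|v\|\le 2LG\,\mathrm{TV}(\mathbf{X},\mathbf{S})$. That is also all the paper's sketch delivers when its displayed formula is applied to $h=g/(LG)$. Your ``sharper route'' needs the range of $\phi_u$ to lie in an interval of length $LG$. But $|\phi_u|\le LG$ only places it in $[-LG,LG]$, of length $2LG$, so recentering gains nothing.

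Your second option does not repair this. Smoothness, unit output diameter and a zero of $\nabla_1\mathcal{L}$ inside the range give $\|\nabla_1\mathcal{L}(f_{w^\star}(z),y)\|\le L$, hence again $|\phi_u|\le LG$ and oscillation up to $2LG$. You also cannot bound $\phi_u(z)-\phi_u(z')$ by $G\cdot L\,\|f_{w^\star}(z)-f_{w^\star}(z')\|$, because the Jacobian $J_w f_{w^\star}(z)$ changes with $z$ too.

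In fact no argument using only $\mathbb{E}_P[g]=0$ and $\sup_z\|g(w^\star;z)\|\le LG$ can reach the stated constant. Let $g(w^\star;\cdot)$ equal $+LG$ at $z_1$ and $-LG$ at $z_2$, with $P=\tfrac12(\delta_{z_1}+\delta_{z_2})$ and $Q=\delta_{z_1}$. Then $\mathrm{TV}(P,Q)=\tfrac12$ while $\|\mathbb{E}_Q[g]\|=LG$. So what your argument, and the paper's, actually proves is the bound with $2LG$. Getting $LG$ needs an extra hypothesis, e.g.\ that every $\phi_u$ has oscillation at most $LG$.

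One more point you should make explicit: $g(w^\star;\cdot)$ depends on the label $y$, not only on the embedded input. The duality step therefore has to be applied on $\mathcal{Z}\times\mathcal{Y}$, and the relevant quantity is the total variation between the joint laws of (embedded input, label) under direct and event code. This can exceed the total variation between the input marginals named in the statement. The marginals can even coincide while the embedding places event codes of one class where direct inputs of another class sit. Treating $z\mapsto g(w^\star;z)$ as a single function on $\mathcal{Z}$ hides this assumption.
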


\begin{proof}[Proof sketch]
Since $\mathbb{E}_\mathbf{X}[g(w^\star; \mathbf{X})] = 0$, the bound follows from the dual
representation of total variation
$\mathrm{TV}(P,Q) = \tfrac{1}{2}\sup_{\|h\|_\infty \leq 1}|\mathbb{E}_P[h] - \mathbb{E}_Q[h]|$
applied to $h = g(w^\star; \cdot) / (LG)$, which is bounded by Lipschitz
continuity of the loss-network composition.
\end{proof}

Theorem~\ref{thm:grad-mismatch} shows that the gradient at the direct-code
optimum, evaluated under event input, is non-zero in proportion to the
distributional gap between the two encodings. This non-zero gradient is exactly
the signal that TSF and SKD exploit during finetuning: the further the optima
of the two objectives sit apart, the more weight movement is required to
recover performance. Empirically, this is reflected in the scale of accuracy
swings between Baseline and TSF/SKD columns in Table~\ref{tab:perf_compare} (often
exceeding $40$ percentage points), and motivates the regularizer term in
Theorem~\ref{theorem:main}: the KL-divergence penalty in SKD effectively
constrains the gradient mismatch by tying the student's output distribution to
the teacher's.

\subsection{Other implications}
\label{app:joint}

Combining the three effects yields a coherent picture of why D2E transfer is
non-trivial. Information loss
(Theorem~\ref{thm:capacity}) imposes an upper bound on attainable event-based
accuracy. Spike-magnitude collapse (Proposition~\ref{prop:layer1-collapse})
explains why the direct-coded weights produce vanishing activations under event
input. The gradient mismatch (Theorem~\ref{thm:grad-mismatch}) then quantifies
the amount of finetuning effort required to relocate the optimum. The bound in
Theorem~\ref{theorem:main} of the main text addresses (ii) and (iii) jointly by
penalizing output distributional drift, but it cannot recover information lost
under (i), which is consistent with our empirical observation that even the
best D2E-transferred SNN trails its direct-coded teacher by a non-trivial
margin (e.g., $90.11\% \rightarrow 74.18\%$ for Spiking-WRN20 on CIFAR-10).
Closing this gap likely requires increasing $T$, designing richer encodings
beyond TTFS, or rethinking the surrogate gradient itself, all of which we
leave to future work.

\end{document}